\newtheorem{theorem}{Theorem}
\title{Attributed Graph Clustering via Adaptive Graph Convolution}
\author{
Xiaotong Zhang$^\ast$
\and
Han Liu$^\ast$\and
Qimai Li\footnote{indicates equal contribution.}\And
Xiao-Ming Wu\thanks{Corresponding author.}
\affiliations
Department of Computing, The Hong Kong Polytechnic University, Hong Kong
\emails
zxt.dut@hotmail.com,
liu.han.dut@gmail.com,
\{csqmli,csxmwu\}@comp.polyu.edu.hk
}
\begin{document}

\maketitle

\begin{abstract}
Attributed graph clustering is challenging as it requires joint modelling of graph structures and node attributes. Recent progress on graph convolutional networks has proved that graph convolution is effective in combining structural and content information, and several recent methods based on it have achieved promising clustering performance on some real attributed networks. However, there is limited understanding of how graph convolution affects clustering performance and how to properly use it to optimize performance for different graphs. Existing methods essentially use graph convolution of a fixed and low order that only takes into account neighbours within a few hops of each node, which underutilizes node relations and ignores the diversity of graphs. In this paper, we propose an adaptive graph convolution method for attributed graph clustering that exploits high-order graph convolution to capture global cluster structure and adaptively selects the appropriate order for different graphs. We establish the validity of our method by theoretical analysis and extensive experiments on benchmark datasets. Empirical results show that our method compares favourably with state-of-the-art methods.
\end{abstract}

\section{Introduction}

Attributed graph clustering \cite{tkde/CaiZC18} aims to cluster nodes of an attributed graph where each node is associated with a set of feature attributes. Attributed graphs widely exist in real-world applications such as social networks, citation networks, protein-protein interaction networks, etc. Clustering plays an important role in detecting communities and analyzing structures of these networks. However, attributed graph clustering requires  joint modelling of graph structures and node attributes to make full use of available data, which presents great challenges.

Some classical clustering methods such as $k$-means only deal with data features. In contrast, many graph-based clustering methods \cite{schaeffer2007graph} only leverage graph connectivity patterns, e.g., user friendships in social networks, paper citation links in citation networks, and genetic interactions in protein-protein interaction networks. Typically, these methods learn node embeddings using Laplacian eigenmaps \cite{newman2006finding}, random walks \cite{perozzi2014deepwalk}, or autoencoder \cite{wang2016structural}. Nevertheless, they usually fall short in attributed graph clustering, as they do not exploit informative node features such as user profiles in social networks, document contents in citation networks, and protein signatures in protein-protein interaction networks.

In recent years, various attributed graph clustering methods have been proposed, including methods based on generative models \cite{chang2009relational}, spectral clustering \cite{xia2014robust}, random walks \cite{yang2015network}, nonnegative matrix factorization \cite{wang2016semantic}, and graph convolutional networks (GCN) \cite{kipf2016semi}. In particular, GCN based methods such as GAE \cite{kipf2016variational}, MGAE \cite{wang2017mgae}, ARGE \cite{pan2018adversarially} have demonstrated state-of-the-art performance on several attributed graph clustering tasks.

Although graph convolution has been shown very effective in integrating structural and feature information, there is little study of how it should be applied to maximize clustering performance. Most existing methods directly use GCN as a feature extractor, where each convolutional layer is coupled with a projection layer, making it difficult to stack many layers and train a deep model. In fact, ARGE \cite{pan2018adversarially} and MGAE \cite{wang2017mgae} use a shallow two-layer and three-layer GCN respectively in their models, which only take into account neighbours of each node in two or three hops away and hence may be inadequate to capture global cluster structures of large graphs. Moreover, all these methods use a fixed model and ignore the diversity of real-world graphs, which can lead to suboptimal performance.

To address these issues, we propose an adaptive graph convolution (AGC) method for attributed graph clustering. The intuition is that neighbouring nodes tend to be in the same cluster and clustering will become much easier if nodes in the same cluster have similar feature representations. To this end, instead of stacking many layers as in GCN, we design a $k$-order graph convolution that acts as a low-pass graph filter on node features to obtain smooth feature representations, where $k$ can be adaptively selected using intra-cluster distance. AGC consists of two steps: 1) conducting $k$-order graph convolution to obtain smooth feature representations; 2) performing spectral clustering on the learned features to cluster the nodes. AGC enables an easy use of high-order graph convolution to capture global cluster structures and allows to select an appropriate $k$ for different graphs.
Experimental results on four benchmark datasets including three citation networks and one webpage network show that AGC is highly competitive and in many cases can significantly outperform state-of-the-art methods.

\section{Related Work}

Graph-based clustering methods can be roughly categorized into two branches: structural graph clustering and attributed graph clustering. Structural graph clustering methods only exploit graph structures (node connectivity). Methods based on graph Laplacian eigenmaps \cite{newman2006finding} assume that nodes with higher similarity should be mapped closer. Methods based on matrix factorization \cite{cao2015grarep,nikolentzos2017matching} factorize the node adjacency matrix into node embeddings. Methods based on random walks \cite{perozzi2014deepwalk,grover2016node2vec} learn node embeddings by maximizing the probability of the neighbourhood of each node. Autoencoder based methods \cite{wang2016structural,cao2016deep,ye2018deep} find low-dimensional node embeddings with the node adjacency matrix and then use the embeddings to reconstruct the adjacency matrix.

Attributed graph clustering \cite{yang2009combining} takes into account both node connectivity and features. Some methods model the interaction between graph connectivity and node features with generative models \cite{chang2009relational,yang2013community,he2017joint,bojchevski2018bayesian}. Some methods apply nonnegative matrix factorization or spectral clustering on both the underlying graph and node features to get a consistent cluster partition \cite{xia2014robust,wang2016semantic,li2018community,yang2015network}. Some most recent methods integrate node relations and features using GCN \cite{kipf2016semi}. In particular, graph autoencoder (GAE) and graph variational autoencoder (VGAE) \cite{kipf2016variational} learn node representations with a two-layer GCN and then reconstruct the node adjacency matrix with autoencoder and variational autoencoder respectively. Marginalized graph autoencoder (MGAE) \cite{wang2017mgae} learns node representations with a three-layer GCN and then applies marginalized denoising autoencoder to reconstruct the given node features. Adversarially regularized graph autoencoder (ARGE) and adversarially regularized variational graph autoencoder (ARVGE) \cite{pan2018adversarially} learn node embeddings by GAE and VGAE respectively and then use generative adversarial networks to enforce the node embeddings to match a prior distribution.

\section{The Proposed Method}

\subsection{Problem Formulation}

Given a non-directed graph $\mathcal{G}=(\mathcal{V},\mathcal{E}, X)$, where $\mathcal{V}=\{v_1, v_2, ..., v_n\}$ is a set of nodes with $|\mathcal{V}|=n$, $\mathcal{E}$ is a set of edges that can be represented as an adjacency matrix $A=\{a_{ij}\}\in \mathbb{R}^{n\times n}$, and $X$ is a feature matrix of all the nodes, i.e., $X=[\bm x_1, \bm x_2, \cdots, \bm x_n]^\top \in \mathbb{R}^{n\times d}$, where $\bm x_i \in \mathbb{R}^{d}$ is a real-valued feature vector of node $v_i$. Our goal is to partition the nodes of the graph $\mathcal{G}$ into $m$ clusters $\mathcal{C}=\{C_1, C_2, \cdots, C_m\}$.
Note that we call $v_j$ a $k$-hop neighbour of $v_i$, if $v_j$ can reach $v_i$ by traversing $k$ edges.

\begin{figure*}[t]
    \centering
    \subfigure[Frequency response]{\includegraphics[height=0.32\columnwidth, width=0.45\columnwidth]{./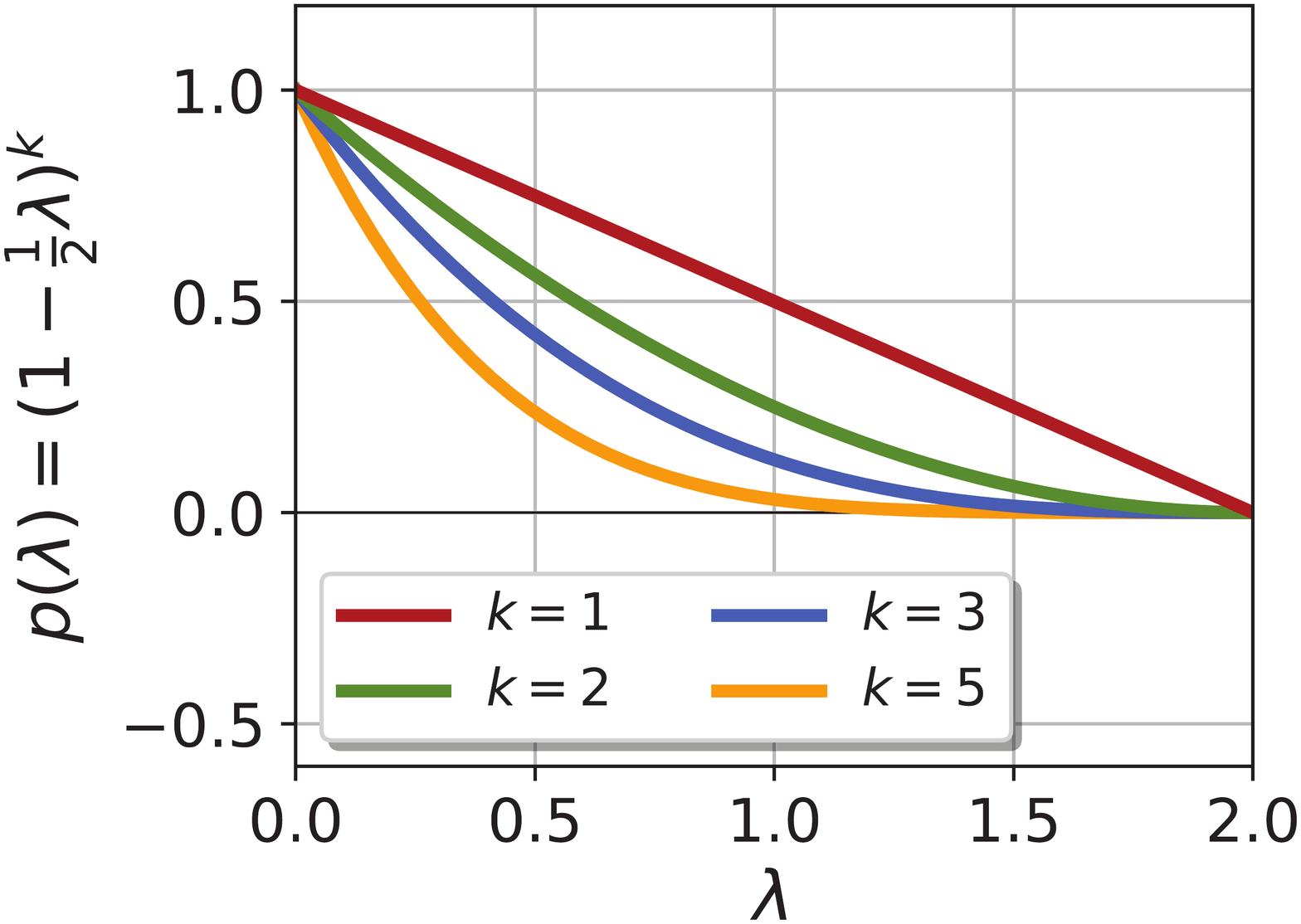}
    \label{fig1}}
    \hfil
    \subfigure[Raw features]{\includegraphics[height=0.32\columnwidth, width=0.38\columnwidth]{./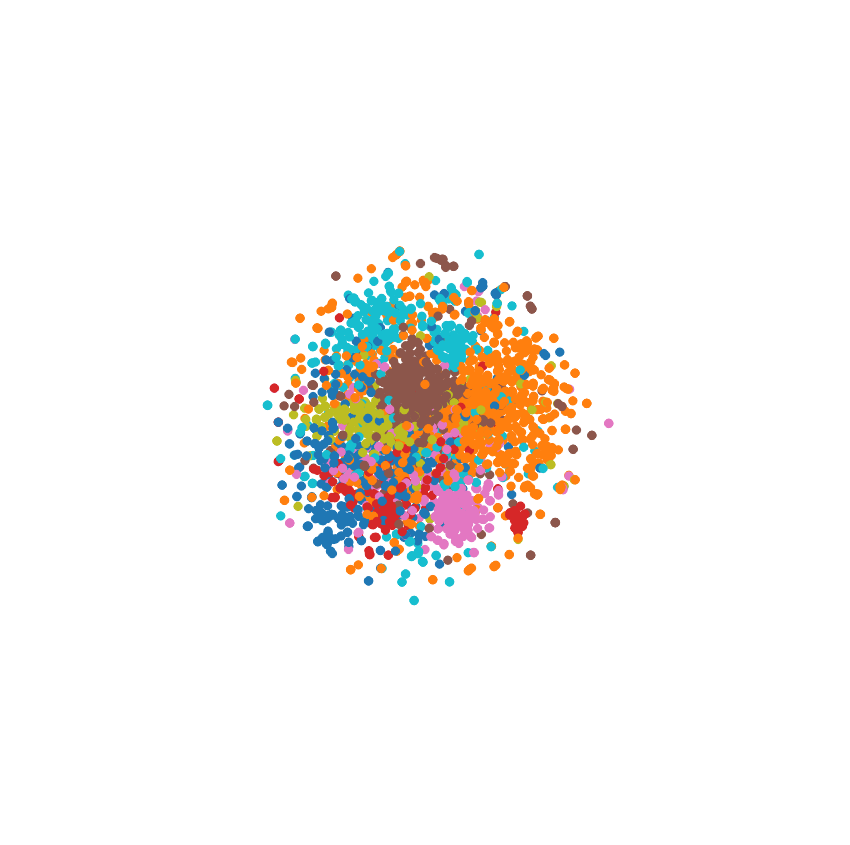}
    \label{fig2}}
    \hfil
    \subfigure[$k=1$]{\includegraphics[height=0.32\columnwidth, width=0.38\columnwidth]{./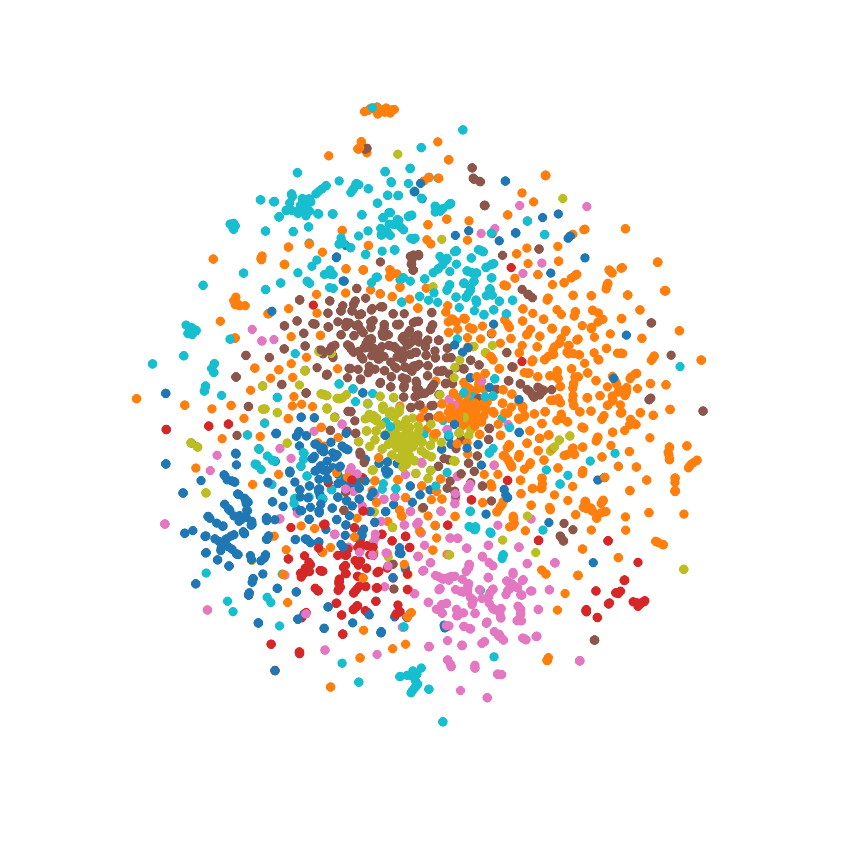}
    \label{fig2}}
    \hfil
    \subfigure[$k=12$]{\includegraphics[height=0.32\columnwidth, width=0.38\columnwidth]{./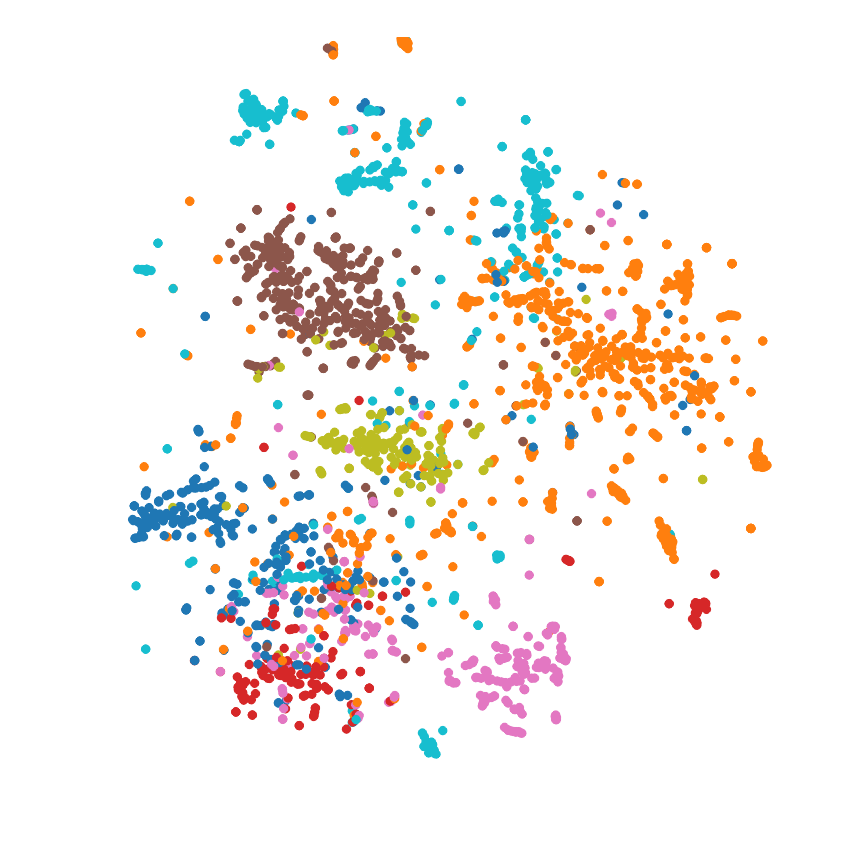}
    \label{fig3}}
    \hfil
    \subfigure[$k=100$]{\includegraphics[height=0.32\columnwidth, width=0.38\columnwidth]{./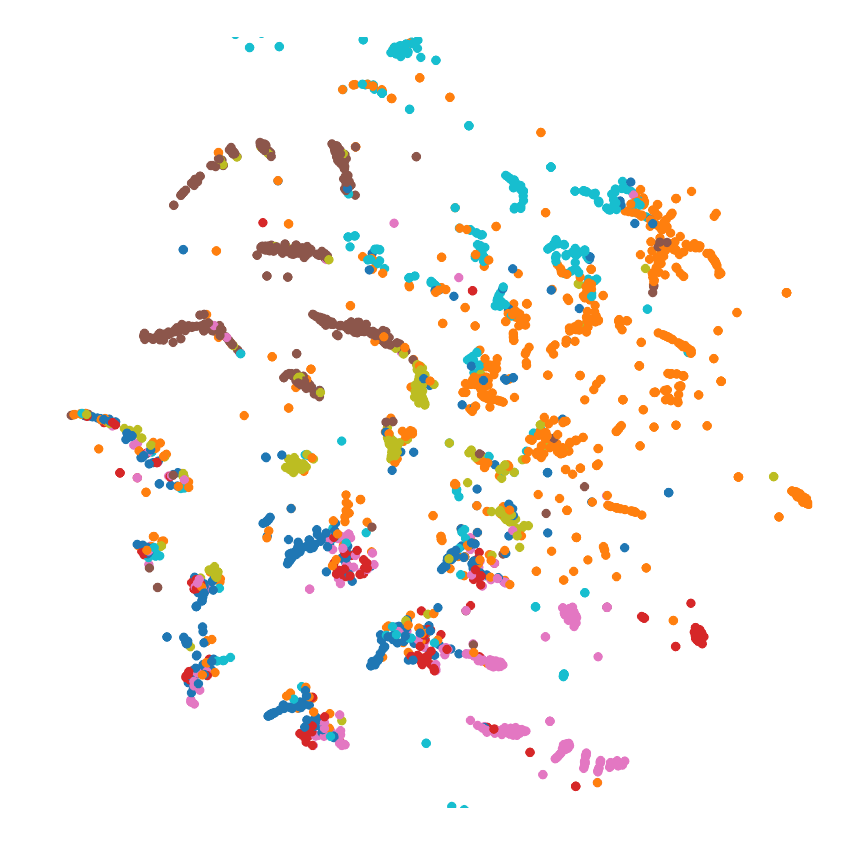}
    \label{fig4}}
    \caption{(a) Frequency response functions. (b-e) t-SNE visualization of the raw and filtered node features of Cora with different $k$.}
    \label{fig:frequency}
\end{figure*}

\subsection{Graph Convolution}
To formally define graph convolution, we first introduce the notions of graph signal and graph filter \cite{shuman2013emerging}.
A \emph{graph signal} can be represented as a vector $\bm f=[f(v_1),\cdots,f(v_n)]^\top$, where $f: \mathcal{V} \to \mathbb{R}$ is a real-valued function on the nodes of a graph.
Given an adjacency matrix $A$ and the degree matrix $D=\text{diag}(d_1, \cdots, d_n)$, the symmetrically normalized graph Laplacian $L_{s}=I-D^{-\frac12}AD^{-\frac12}$ can be eigen-decomposed as $L_s=U\Lambda U^{-1}$, where $\Lambda=\text{diag}(\lambda_1,\cdots,\lambda_n)$ are the eigenvalues in increasing order, and $U=[\bm u_1,\cdots,\bm u_n]$ are the associated orthogonal eigenvectors. A linear \emph{graph filter} can be represented as a matrix $G=U p(\Lambda)U^{-1}\in \mathbb{R}^{n\times n}$, where $p(\Lambda)=\text{diag}(p(\lambda_1),\cdots,p(\lambda_n))$ is called the frequency response function of $G$.
Graph convolution is defined as the multiplication of a graph signal $\bm f$ with a graph filter $G$:
\begin{equation}\label{conv}
  \bar{\bm f}=G \bm f,
\end{equation}
where $\bar{\bm f}$ is the filtered graph signal.

Each column of the feature matrix $X$ can be considered as a graph signal. In graph signal processing \cite{shuman2013emerging}, the eigenvalues $(\lambda_q)_{1\le q \le n}$ can be taken as frequencies and the associated eigenvectors $(\bm u_q)_{1\le q \le n}$ are considered as Fourier basis of the graph. A graph signal $\bm f$ can be decomposed into a linear combination of the eigenvectors, i.e.,
\begin{equation}\label{eq:signal-decomp}
    \bm f=U \bm z =\sum\nolimits_{q=1}^n{z_q \bm u_q},
\end{equation}
where $\bm z = [z_1,\cdots, z_n]^\top$ and $z_q$ is the coefficient of $\bm u_q$. The magnitude of the coefficient $|z_q|$ indicates the strength of the basis signal $\bm u_q$ presented in $\bm f$.

A graph signal is smooth if nearby nodes on the graph have similar features representations. The smoothness of a basis signal $\bm u_q$ can be measured by Laplacian-Beltrami operator $\Omega(\cdot)$ \cite{Chung97}, i.e.,
\begin{equation} \label{eq:laplacian_property}
\begin{split}
\Omega(\bm u_q)
&=\frac12\sum_{(v_i,v_j)\in \mathcal{E}}{a_{ij}\left\|\frac{\bm u_q(i)}{\sqrt{d_{i}}}-\frac{\bm u_q(j)}{\sqrt{d_{j}}}\right\|_2^2}\\
&= {\bm u_q}^\top L_s{\bm u_q} =\lambda_q,
\end{split}
\end{equation}
where $\bm u_q(i)$ denotes the $i$-th element of the vector $\bm u_q$. (\ref{eq:laplacian_property}) indicates that the basis signals associated with lower frequencies (smaller eigenvalues) are smoother, which means that a smooth graph signal $\bm f$ should contain more low-frequency basis signals than high-frequency ones. This can be achieved by performing graph convolution with a low-pass graph filter $G$, as shown below.

By (\ref{eq:signal-decomp}), the graph convolution can be written as
\begin{equation}\label{eq:filter_decom}
\bar{\bm f}= G\bm f =U p(\Lambda) U^{-1} \cdot U \bm z=\sum\nolimits_{q=1}^n{p(\lambda_q)z_q \bm u_q}.
\end{equation}
In the filtered signal $\bar{\bm f}$, the coefficient $z_q$ of the basis signal $\bm u_q$ is scaled by $p(\lambda_q)$. To preserve the low-frequency basis signals and remove the high-frequency ones in $\bm f$, the graph filter $G$ should be low-pass, i.e., the frequency response function $p(\cdot)$ should be decreasing and nonnegative.

A low-pass graph filter can take on many forms. Here, we design a low-pass graph filter with the frequency response function
\begin{equation}\label{fre}
p(\lambda_q)=1-\frac12\lambda_q.
\end{equation}
As shown by the red line in Figure \ref{fig1}, one can see that $p(\cdot)$ in (\ref{fre}) is decreasing and nonnegative on $[0,2]$. Note that all the eigenvalues $\lambda_q$ of the symmetrically normalized graph Laplacian $L_s$ fall into interval $[0,2]$ \cite{Chung97}, which indicates that $p(\cdot)$ in (\ref{fre}) is low-pass. The graph filter $G$ with $p(\cdot)$ in (\ref{fre}) as the frequency response function can then be written as
\begin{equation}\label{fil}
G=U p(\Lambda)U^{-1}=U(I-\frac12\Lambda)U^{-1}=I-\frac12 L_s.
\end{equation}

By performing graph convolution on the feature matrix $X$, we obtain the filtered feature matrix:
\begin{equation}\label{GC}
  \bar{X}=GX,
\end{equation}
where $\bar{X}=[\bm {\bar{x}}_1, \bm {\bar{x}}_2, \cdots, \bm {\bar{x}}_n]^\top \in \mathbb{R}^{n\times d}$ is the filtered node features after graph convolution. Applying such a low-pass graph filter on the feature matrix makes adjacent nodes have similar feature values along each dimension, i.e., the graph signals are smooth. Based on the cluster assumption that nearby nodes are likely to be in the same cluster, performing graph convolution with a low-pass graph filter will make the downstream clustering task easier.

Note that the proposed graph filter in (\ref{fil}) is different from the graph filter used in GCN. The graph filter in GCN is $G=I-L_s$ with the frequency response function $p(\lambda_q)=1-\lambda_q$ \cite{Qimai19}, which is clearly not low-pass as it is negative for $\lambda_q \in (1,2]$ .

\subsubsection{$k$-Order Graph Convolution}

To make clustering easy, it is desired that nodes of the same class should have similar feature representations after graph filtering. However, the first-order graph convolution in (\ref{GC}) may not be adequate to achieve this, especially for large and sparse graphs, as it updates each node $v_i$ by the aggregation of its $1$-hop neighbours only, without considering long-distance neighbourhood relations. To capture global graph structures and facilitate clustering, we propose to use $k$-order graph convolution.

We define $k$-order graph convolution as
\begin{equation}\label{kGC}
  \bar{X}=(I-\frac12 L_s)^kX,
\end{equation}
where $k$ is a positive integer, and the corresponding graph filter is
\begin{equation}\label{kfilter}
G=(I-\frac12 L_s)^k=U(I-\frac12\Lambda)^kU^{-1}.
\end{equation}
The frequency response function of $G$ in (\ref{kfilter}) is
\begin{equation}\label{kfre}
p(\lambda_q)=(1-\frac12\lambda_q)^k.
\end{equation}
As shown in Figure \ref{fig1}, $p(\lambda_q)$ in (\ref{kfre}) becomes more low-pass as $k$ increases, indicating that the filtered node features $\bar{X}$ will be smoother.

The iterative calculation formula of $k$-order graph convolution is
\begin{equation}\label{iter}
\begin{split}
&\bm {\bar{x}}_i^{(0)}={\bm x_i}, \quad \bm {\bar{x}}_i^{(1)}=\frac12 \left(\bm {\bar{x}}_i^{(0)}+\sum_{(v_i,v_j)\in \mathcal{E}}\frac{a_{ij}}{\sqrt{d_id_j}} \bm {\bar{x}}_j^{(0)}\right), \cdots,\\
&\bm {\bar{x}}_i^{(k)}=\frac12 \left(\bm {\bar{x}}_i^{(k-1)}+\sum_{(v_i,v_j)\in \mathcal{E}}\frac{a_{ij}}{\sqrt{d_id_j}} \bm {\bar{x}}_j^{(k-1)}\right),
\end{split}
\end{equation}
and the final $\bm {\bar{x}}_i$ is $\bm {\bar{x}}_i^{(k)}$.

From (\ref{iter}), one can easily see that $k$-order graph convolution updates the features of each node $v_i$ by aggregating the features of its $k$-hop neighbours iteratively. As $k$-order graph convolution takes into account long-distance data relations, it can be useful for capturing global graph structures to improve clustering performance.

\subsubsection{Theoretical Analysis}
As $k$ increases, $k$-order graph convolution will make the node features smoother on each dimension. In the following, we prove this using the Laplacian-Beltrami operator $\Omega(\cdot)$ defined in (\ref{eq:laplacian_property}).
Denote by $\bm f$ a column of the feature matrix $X$, which can be decomposed as $\bm f=U \bm z$. Note that $\Omega(\beta\bm f)=\beta^2\Omega(\bm f)$, where $\beta$ is a scalar. Therefore, to compare the smoothness of different graph signals, we need to put them on a common scale. In what follows, we consider the smoothness of a normalized signal $\frac{\bm f}{\lVert \bm f \rVert_2}$, i.e.,

\begin{equation}
    \Omega\left(\frac{\bm f}{\lVert \bm f \rVert_2}\right)
    = \frac{\bm f^\top L_s \bm f}{\lVert \bm f \rVert_2^2}
    = \frac{\bm z^\top\Lambda \bm z}{\lVert \bm z\rVert_2^2}
    = \frac{\sum_{i=1}^n \lambda_i z_{i}^2}{\sum_{i=1}^n{z_i^2}}.
\end{equation}

\begin{theorem}
    If the frequency response function $p(\lambda)$ of a graph filter $G$ is nonincreasing and nonnegative for all $\lambda_i$, then for any signal $\bm f$ and the filtered signal $\bar{\bm f}=G\bm f$, we always have
    $$\Omega\left(\frac{\bar{\bm f}}{\lVert \bar{\bm f} \rVert_2}\right)\le\Omega\left(\frac{\bm f}{\lVert \bm f \rVert_2}\right).$$
\end{theorem}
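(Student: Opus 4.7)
The plan is to reduce everything to a weighted-average comparison in the spectral domain and then apply a Chebyshev-type sum inequality. Let $\bm f = U\bm z$ with $\bm z = (z_1,\dots,z_n)^\top$, so that, by equation (\ref{eq:filter_decom}), the filtered signal has spectral coefficients $\bar z_i = p(\lambda_i)z_i$. Plugging into the identity for $\Omega(\bm f/\lVert\bm f\rVert_2)$ displayed just above the theorem, I get
\begin{equation*}
    \Omega\!\left(\frac{\bm f}{\lVert\bm f\rVert_2}\right) = \frac{\sum_i \lambda_i z_i^2}{\sum_i z_i^2},
    \qquad
    \Omega\!\left(\frac{\bar{\bm f}}{\lVert\bar{\bm f}\rVert_2}\right) = \frac{\sum_i \lambda_i\, p(\lambda_i)^2 z_i^2}{\sum_i p(\lambda_i)^2 z_i^2}.
\end{equation*}
So the claim reduces to showing that reweighting the nonnegative measure $\{z_i^2\}$ by the extra factor $p(\lambda_i)^2$ cannot increase the weighted mean of $\{\lambda_i\}$.

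The second step is the key inequality. Setting $a_i = z_i^2 \ge 0$ and $b_i = p(\lambda_i)^2 \ge 0$, I want to show $\sum_i \lambda_i a_i b_i \cdot \sum_j a_j \le \sum_i \lambda_i a_i \cdot \sum_j a_j b_j$. The standard trick is to symmetrize: subtract the two sides, write the difference as a double sum over $(i,j)$, then symmetrize in $i\leftrightarrow j$ and combine to obtain
\begin{equation*}
    2\Big(\sum_i \lambda_i a_i\, \sum_j a_j b_j - \sum_i \lambda_i a_i b_i\, \sum_j a_j\Big) = \sum_{i,j} a_i a_j (\lambda_i - \lambda_j)(b_j - b_i).
\end{equation*}
Because $p$ is nonnegative and nonincreasing, $b_i = p(\lambda_i)^2$ is also nonincreasing in $\lambda_i$ (product of two nonnegative nonincreasing functions). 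Hence $(\lambda_i - \lambda_j)$ and $(b_j - b_i)$ have the same sign for every pair $(i,j)$, making every summand nonnegative. This yields the desired inequality.

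The only subtlety I foresee is the degenerate case $\bar{\bm f} = \bm 0$ (equivalently $\lVert\bar{\bm f}\rVert_2 = 0$), where the left-hand ratio is not defined; I would handle it by interpreting $\Omega(\bar{\bm f}/\lVert\bar{\bm f}\rVert_2)$ as $0$ by convention (or simply exclude this trivial case), and likewise assume $\bm f \neq \bm 0$ so the right-hand denominator is positive. Beyond that, the argument is a clean Chebyshev sum inequality once the spectral expression is in place, so the main hurdle is really just the symmetrization step above — everything else is bookkeeping with the eigendecomposition.
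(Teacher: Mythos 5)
Your proof is correct, and it follows the same top-level reduction as the paper: express both sides in the spectral domain as weighted means of the eigenvalues, with the filtered signal's weights being $p(\lambda_i)^2 z_i^2$ instead of $z_i^2$, and then show that reweighting a nonnegative measure by a factor that is nonincreasing in $\lambda_i$ cannot increase the weighted mean. Where you diverge is in how that key comparison is established: the paper states it as a lemma in ratio form ($T_1\le\cdots\le T_n$, $c_i/b_i$ nonincreasing, all nonnegative, implies $\sum c_iT_i/\sum c_i \le \sum b_iT_i/\sum b_i$) and proves it by induction on $n$, bootstrapping from the $n=2$ case, whereas you cross-multiply and prove the resulting inequality by the Chebyshev-type symmetrization
\begin{equation*}
2\Bigl(\sum_i \lambda_i a_i \sum_j a_j b_j - \sum_i \lambda_i a_i b_i \sum_j a_j\Bigr)=\sum_{i,j} a_i a_j(\lambda_i-\lambda_j)(b_j-b_i)\ge 0,
\end{equation*}
with $a_i=z_i^2$ and $b_i=p(\lambda_i)^2$ nonincreasing in $\lambda_i$ because $p$ is nonnegative and nonincreasing. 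Your symmetrization is a self-contained one-line identity that avoids the induction bookkeeping, does not require ordering the indices, and is untroubled by zero weights; the paper's inductive lemma, by contrast, is stated with explicit ratios $c_i/b_i$ (implicitly needing $b_i>0$) and a slightly more involved chain of intermediate bounds. Your explicit handling of the degenerate case $\bar{\bm f}=\bm 0$ (where the left-hand side is undefined) is also a point the paper passes over silently; with that caveat addressed, both arguments establish the theorem.
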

\begin{proof}
    We first prove the following lemma by induction.
    The following inequality
    \begin{equation}\label{eq:lemma}
        T_c^{(n)}=\frac{\sum_{i=1}^n c_iT_i}{\sum_{i=1}^n c_i} \le \frac{\sum_{i=1}^n b_iT_i}{\sum_{i=1}^n b_i}=T_b^{(n)}
    \end{equation}
    holds, if $T_1\le\cdots\le T_n$ and $\frac{c_1}{b_1}\ge\cdots\ge\frac{c_n}{b_n}$ with $\forall c_i, b_i \ge 0$. It is easy to validate that it holds when $n=2$.

    Now assume that it holds when $n=l-1$, i.e., $T_c^{(l-1)}\le T_b^{(l-1)}$. Then, consider the case of $n=l$ and we have
    \begin{equation}
        \begin{split}
            \frac{\sum_{i=1}^l c_iT_i}{\sum_{i=1}^l c_i}
            &= \frac{\sum_{i=1}^{l-1} c_iT_i + c_l T_l}{\sum_{i=1}^{l-1} c_i + c_l}\\
            &= \frac{(\sum_{i=1}^{l-1}{c_i})T_c^{(l-1)} + c_l T_l}{\sum_{i=1}^{l-1} c_i + c_l}\\
            &\le \frac{(\sum_{i=1}^{l-1}{c_i})T_b^{(l-1)} + c_l T_l}{\sum_{i=1}^{l-1} c_i + c_l}.
        \end{split}
    \end{equation}
    Since $T_b^{(l-1)}=\frac{\sum_{i=1}^{l-1} b_iT_i}{\sum_{i=1}^{l-1} b_i} \le \frac{\sum_{i=1}^{l-1} b_iT_{l-1}}{\sum_{i=1}^{l-1} b_i}=T_{l-1}$, we have $T_b^{(l-1)}\le T_l$. Also note that $\frac{\sum_{i=1}^{l-1} c_i}{\sum_{i=1}^{l-1} b_i}\ge\frac{c_l}{b_l}$. Since the lemma holds when $n = 2$, we have
    \begin{equation}
        \begin{split}
            \frac{(\sum_{i=1}^{l-1}{c_i})T_b^{(l-1)} + c_l T_l}{\sum_{i=1}^{l-1} c_i + c_l}
            &\le \frac{(\sum_{i=1}^{l-1}{b_i})T_b^{(l-1)} + b_l T_l}{\sum_{i=1}^{l-1} b_i + b_l} \\
            &= \frac{\sum_{i=1}^l b_iT_i}{\sum_{i=1}^l b_i},
        \end{split}
    \end{equation}
    which shows that the inequality (\ref{eq:lemma}) also holds when $n=l$. By induction, the above lemma holds for all $n$.

    We can now prove Theorem 1 using this lemma. For convenience, we arrange the eigenvalues $\lambda_i$ of $L_s$ in increasing order such that $0\le\lambda_1\le\cdots\le\lambda_n$. Since $p(\lambda)$ is nonincreasing and nonnegative,  $p(\lambda_1)\ge\cdots\ge p(\lambda_n)\ge0$. Theorem 1 can then be proved with the above lemma by letting
    \begin{equation}
        T_i=\lambda_i,\quad b_i=z_i^2,\quad c_i=p^2(\lambda_i)z_i^2.
    \end{equation}
\end{proof}

Assuming that $\bm f$ and $\bar{\bm f}$ are obtained by $(k-1)$-order and $k$-order graph convolution respectively, one can immediately infer from Theorem 1 that $\bar{\bm f}$ is smoother than $\bm f$. In other words, $k$-order graph convolution will produce smoother features as $k$ increases. Since nodes in the same cluster tend to be densely connected, they are likely to have more similar feature representations with large $k$, which can be beneficial for clustering.

\subsection{Clustering via Adaptive Graph Convolution}

We perform the classical spectral clustering method \cite{perona1998,von2007tutorial} on the filtered feature matrix $\bar{X}$ to partition the nodes of $\mathcal{V}$ into $m$ clusters, similar to \cite{wang2017mgae}. Specifically, we first apply the linear kernel $K=\bar{X}\bar{X}^T$ to learn pairwise similarity between nodes, and then we calculate $W=\frac12(|K|+|K^\top|)$ to make sure that the similarity matrix is symmetric and nonnegative, where $|\cdot|$ means taking absolute value of each element of the matrix. Finally, we perform spectral clustering on $W$ to obtain clustering results by computing the eigenvectors associated with the $m$ largest eigenvalues of $W$ and then applying the \emph{k}-means algorithm on the eigenvectors to obtain cluster partitions.

The central issue of $k$-order graph convolution is how to select an appropriate $k$. Although $k$-order graph convolution can make nearby nodes have similar feature representations, $k$ is definitely not the larger the better. $k$ being too large will lead to over-smoothing, i.e., the features of nodes in different clusters are mixed and become indistinguishable. Figure \ref{fig:frequency}(b-e) visualizes the raw and filtered node features of the \emph{Cora} citation network with different $k$, where the features are projected by t-SNE \cite{van2008visualizing} and nodes of the same class are indicated by the same colour. It can be seen that the node features become similar as $k$ increases. The data exhibits clear cluster structures with $k=12$. However, with $k=100$, the features are over-smoothed and nodes from different clusters are mixed together.

To adaptively select the order $k$, we use the clustering performance metric -- internal criteria based on the information intrinsic to the data alone \cite{Aggarwal14}. Here, we consider intra-cluster distance ($intra(\mathcal{C})$ for a given cluster partition $\mathcal{C}$), which represents the compactness of $\mathcal{C}$:
\begin{equation} \label{intra}
intra(\mathcal{C})=\frac{1}{|\mathcal{C}|}\sum_{C \in \mathcal{C}}\frac{1}{|C|(|C|-1)}\sum_{\substack{v_i,v_j \in C,\\ v_i\neq v_j}}\|\bar{x}_i-\bar{x}_j\|_2.
\end{equation}
Note that inter-cluster distance can also be used to measure clustering performance given fixed data features, and a good cluster partition should have a large inter-cluster distance and a small intra-cluster distance. However, by Theorem 1, the node features become smoother as $k$ increases, which could significantly reduce both intra-cluster and inter-cluster distances. Hence, inter-cluster distance may not be a reliable metric for measuring the clustering performance w.r.t. different $k$, and so we propose to observe the variation of intra-cluster distance for choosing $k$.

Our strategy is to find the first local minimum of $intra(\mathcal{C})$ w.r.t. $k$. Specifically, we start from $k=1$ and increment it by 1 iteratively. In each iteration $t$, we first obtain the cluster partition $\mathcal{C}^{(t)}$ by performing $k$-order ($k=t$) graph convolution and spectral clustering, then we compute $intra(\mathcal{C}^{(t)})$. Once $intra(\mathcal{C}^{(t)})$ is larger than $intra(\mathcal{C}^{(t-1)})$, we stop the iteration and set the chosen $k=t-1$. More formally, consider $d\_intra(t-1)=intra(\mathcal{C}^{(t)})-intra(\mathcal{C}^{(t-1)})$, the criterion for stopping the iteration is $d\_intra(t-1)>0$, i.e., stops at the first local minimum of $intra(\mathcal{C}^{(t)})$. So, the final choice of cluster partition is $\mathcal{C}^{(t-1)}$. The benefits of this selection strategy are two-fold. First, it ensures finding a local minimum for $intra(\mathcal{C})$ that may indicate a good cluster partition and avoids over-smoothing. Second, it is time efficient to stop at the first local minimum of $intra(\mathcal{C})$.

\subsection{Algorithm Procedure and Time Complexity}

\begin{algorithm}[t]\footnotesize
\caption{AGC}
\label{alg}
\begin{algorithmic}[1]
\REQUIRE
Node set $\mathcal{V}$, adjacency matrix $A$, feature matrix $X$, and maximum iteration number $max\_iter$.
\ENSURE
Cluster partition $\mathcal{C}$.
\STATE Initialize $t=0$ and $intra(\mathcal{C}^{(0)})=+\infty$. Compute the symmetrically normalized graph Laplacian $L_s=I-D^{-\frac12}AD^{-\frac12}$, where $D$ is the degree matrix of $A$.
\REPEAT
\STATE Set $t=t+1$ and $k=t$.
\STATE Perform $k$-order graph convolution by Eq.~(\ref{kGC}) and get $\bar{X}$.
\STATE Apply the linear kernel $K=\bar{X}\bar{X}^T$, and calculate the similarity matrix $W=\frac12(|K|+|K^T|)$.
\STATE Obtain the cluster partition $\mathcal{C}^{(t)}$ by performing spectral clustering on $W$.
\STATE Compute $intra(\mathcal{C}^{(t)})$ by Eq.~(\ref{intra}).
\UNTIL{$d\_intra(t-1)>0$ or $t>max\_iter$}
\STATE Set $k=t-1$ and $\mathcal{C}=\mathcal{C}^{(t-1)}$.
\end{algorithmic}
\end{algorithm}

The overall procedure of AGC is shown in Algorithm \ref{alg}.
Denote by $n$ the number of nodes, $d$ the number of attributes, $m$ the number of clusters, and $N$ the number of nonzero entries of the adjacency matrix $A$. Note that for a sparse $A$, $N<<n^2$. The time complexity of computing $L_s$ in the initialization is $\mathcal{O}(N)$. In each iteration, for a sparse $L_s$, the fastest way of computing $k$-order graph convolution in (\ref{kGC}) is to left multiply $X$ by $I-\frac12 L_s$ repeatedly for $k$ times, which has the time complexity $\mathcal{O}(Ndk)$. The time complexity of performing spectral clustering on $\bar{X}$ in each iteration is $\mathcal{O}(n^2d+n^2m)$. The time complexity of computing $intra(\mathcal{C})$ in each iteration is $\mathcal{O}(\frac1m n^2d)$. Since $m$ is usually much smaller than $n$ and $d$, the time complexity of each iteration is approximately  $\mathcal{O}(n^2d+Ndk)$. If Algorithm \ref{alg} iterates $t$ times, the overall time complexity of AGC is $\mathcal{O}(n^2dt+Ndt^2)$. Unlike existing GCN based clustering methods, AGC does not need to train the neural network parameters, which makes it time efficient.

\section{Experiments}

\subsection{Datasets}
We evaluate our method AGC on four benchmark attributed networks. \emph{Cora}, \emph{Citeseer} and \emph{Pubmed} \cite{kipf2016variational} are citation networks where nodes correspond to publications and are connected if one cites the other. \emph{Wiki} \cite{yang2015network} is a webpage network where nodes are webpages and are connected if one links the other.
The nodes in \emph{Cora} and \emph{Citeseer} are associated with binary word vectors, and the nodes in \emph{Pubmed} and \emph{Wiki} are associated with tf-idf weighted word vectors. Table \ref{tab:dataset} summarizes the details of the datasets.

%
%
%

\begin{table}[t]
\centering
\footnotesize

\begin{tabular}{lrrrr}
    \toprule
    Dataset    & \#Nodes & \#Edges & \#Features  & \#Classes  \\
    \midrule
    Cora       & 2708   & 5429   & 1433    & 7   \\
    Citeseer   & 3327   & 4732   & 3703    & 6  \\
    Pubmed     & 19717  & 44338  & 500     & 3     \\
    Wiki       & 2405   & 17981  & 4973    & 17  \\
    \bottomrule
\end{tabular}
\caption{Dataset statistics.}
\label{tab:dataset}
\end{table}

\begin{table*}[t]\footnotesize
 \centering
  \begin{tabular}{l l rcl| rcl| rcl| rcl}
   \toprule
   Methods & Input & \multicolumn{3}{c}{{Cora}} & \multicolumn{3}{c}{{Citeseer}} & \multicolumn{3}{c}{{Pubmed}} & \multicolumn{3}{c}{{Wiki}}\\
   \cmidrule(lr){3-5} \cmidrule(lr){6-8} \cmidrule(lr){9-11} \cmidrule(lr){12-14}
   & & {Acc\%} & {NMI\%} & {F1\%}  & {Acc\%} & {NMI\%} & {F1\%}  & {Acc\%} & {NMI\%} & {F1\%} & {Acc\%} & {NMI\%} & {F1\%}\\
   \midrule
   {\emph{k}-means}  & Feature & 34.65 & 16.73 & 25.42 & 38.49 & 17.02 & 30.47  & 57.32 & 29.12 & 57.35 & 33.37 & 30.20  & 24.51  \\
   Spectral-f  & Feature & 36.26 & 15.09 & 25.64 & 46.23 & 21.19 & 33.70  & 59.91 & \textbf{32.55} & \textbf{58.61} & 41.28 & 43.99 & 25.20  \\
   \midrule
   Spectral-g & Graph &  34.19 & 19.49 & 30.17 &  25.91 & 11.84 & 29.48 &  39.74 & 3.46 & 51.97 & 23.58 & 19.28 & 17.21  \\
   DeepWalk & Graph & 46.74 & 31.75 & 38.06  & 36.15 & 9.66 & 26.70  & 61.86 & 16.71 & 47.06 &  38.46 & 32.38 & 25.74  \\
   DNGR & Graph   & 49.24 & 37.29 & 37.29  & 32.59 & 18.02 & 44.19 & 45.35 & 15.38 & 17.90 & 37.58 & 35.85 & 25.38  \\
   \midrule
   GAE & Both & 53.25 & 40.69 & 41.97 & 41.26 & 18.34 & 29.13  & 64.08 & 22.97 & 49.26 &  17.33 &  11.93 & 15.35   \\
   VGAE & Both & 55.95 & 38.45 & 41.50 & 44.38 & 22.71 & 31.88  & \textbf{65.48} & 25.09 & 50.95 &  28.67 & 30.28 &  20.49  \\
   MGAE & Both & 63.43 & \textbf{45.57} & 38.01 & \textbf{63.56} & \textbf{39.75} & 39.49 & 43.88 & 8.16 & 41.98 & \textbf{50.14} & \textbf{47.97} & \textbf{39.20}\\
   ARGE & Both & \textbf{64.00} & 44.90 & 61.90 & 57.30 & 35.00 & \textbf{54.60}   & 59.12 & 23.17 & 58.41 & 41.40 & 39.50 &  38.27  \\
   ARVGE & Both & 63.80 & 45.00 & \textbf{62.70} & 54.40 & 26.10 & 52.90  & 58.22 & 20.62 & 23.04 & 41.55 & 40.01 & 37.80   \\
   \midrule
   AGC & Both & \textbf{68.92} & \textbf{53.68} & \textbf{65.61} & \textbf{67.00} & \textbf{41.13} & \textbf{62.48} & \textbf{69.78} & \textbf{31.59} & \textbf{68.72} & \textbf{47.65} & \textbf{45.28}   &  \textbf{40.36} \\
   \bottomrule
  \end{tabular}
  \caption{Clustering performance.}
  \label{tab:results}
\end{table*}

\subsection{Baselines and Evaluation Metrics}
We compare AGC with three kinds of clustering methods. 1) Methods that only use node features: \emph{k}-means and spectral clustering (Spectral-f) that constructs a similarity matrix with the node features by linear kernel. 2) Structural clustering methods that only use graph structures: spectral clustering (Spectral-g) that takes the node adjacency matrix as the similarity matrix, DeepWalk \cite{perozzi2014deepwalk}, and deep neural networks for graph representations (DNGR) \cite{cao2016deep}. 3) Attributed graph clustering methods that utilize both node features and graph structures: graph autoencoder (GAE) and graph variational autoencoder (VGAE) \cite{kipf2016variational}, marginalized graph autoencoder (MGAE) \cite{wang2017mgae}, and adversarially regularized graph autoencoder (ARGE) and variational graph autoencoder (ARVGE) \cite{pan2018adversarially}.

To evaluate the clustering performance, we adopt three widely used performance measures \cite{Aggarwal14}: clustering accuracy (Acc), normalized mutual information (NMI), and macro F1-score (F1).

\subsection{Parameter Settings}
For AGC, we set $max\_iter=60$. For other baselines, we follow the parameter settings in the original papers. In particular, for DeepWalk, the number of random walks is 10, the number of latent dimensions for each node is 128, and the path length of each random walk is 80. For DNGR, the autoencoder is of three layers with 512 neurons and 256 neurons in the hidden layers respectively. For GAE and VGAE, we construct encoders with a 32-neuron hidden layer and a 16-neuron embedding layer, and train the encoders for 200 iterations using the Adam optimizer with learning rate 0.01.
For MGAE, the corruption level $p$ is 0.4, the number of layers is 3, and the parameter $\lambda$ is $10^{-5}$. For ARGE and ARVGE, we construct encoders with a 32-neuron hidden layer and a 16-neuron embedding layer. The discriminators are built by two hidden layers with 16 neurons and 64 neurons respectively. On \emph{Cora}, \emph{Citeseer} and \emph{Wiki}, we train the autoencoder-related models of ARGE and ARVGE for 200 iterations with the Adam optimizer, with the encoder learning rate and the discriminator learning rate both as 0.001; on \emph{Pubmed}, we train them for 2000 iterations with the encoder learning rate 0.001 and the discriminator learning rate 0.008.

\begin{figure}[t]
\centering
\subfigure[Cora]{\includegraphics[height=0.45\columnwidth, width=0.48\columnwidth]{./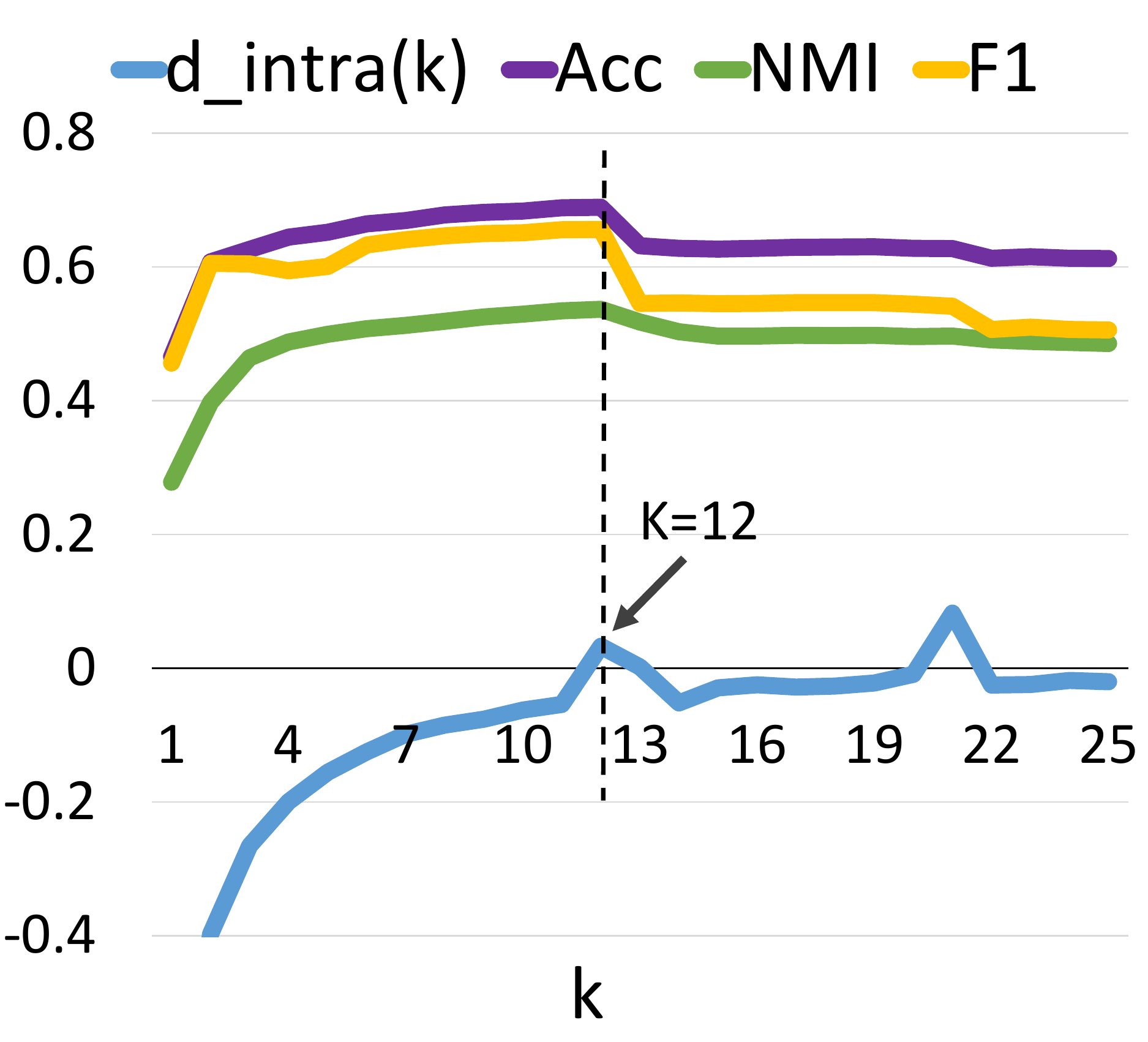}
\label{fig_curve1}}
\hfil
\subfigure[Wiki]{\includegraphics[height=0.45\columnwidth, width=0.48\columnwidth]{./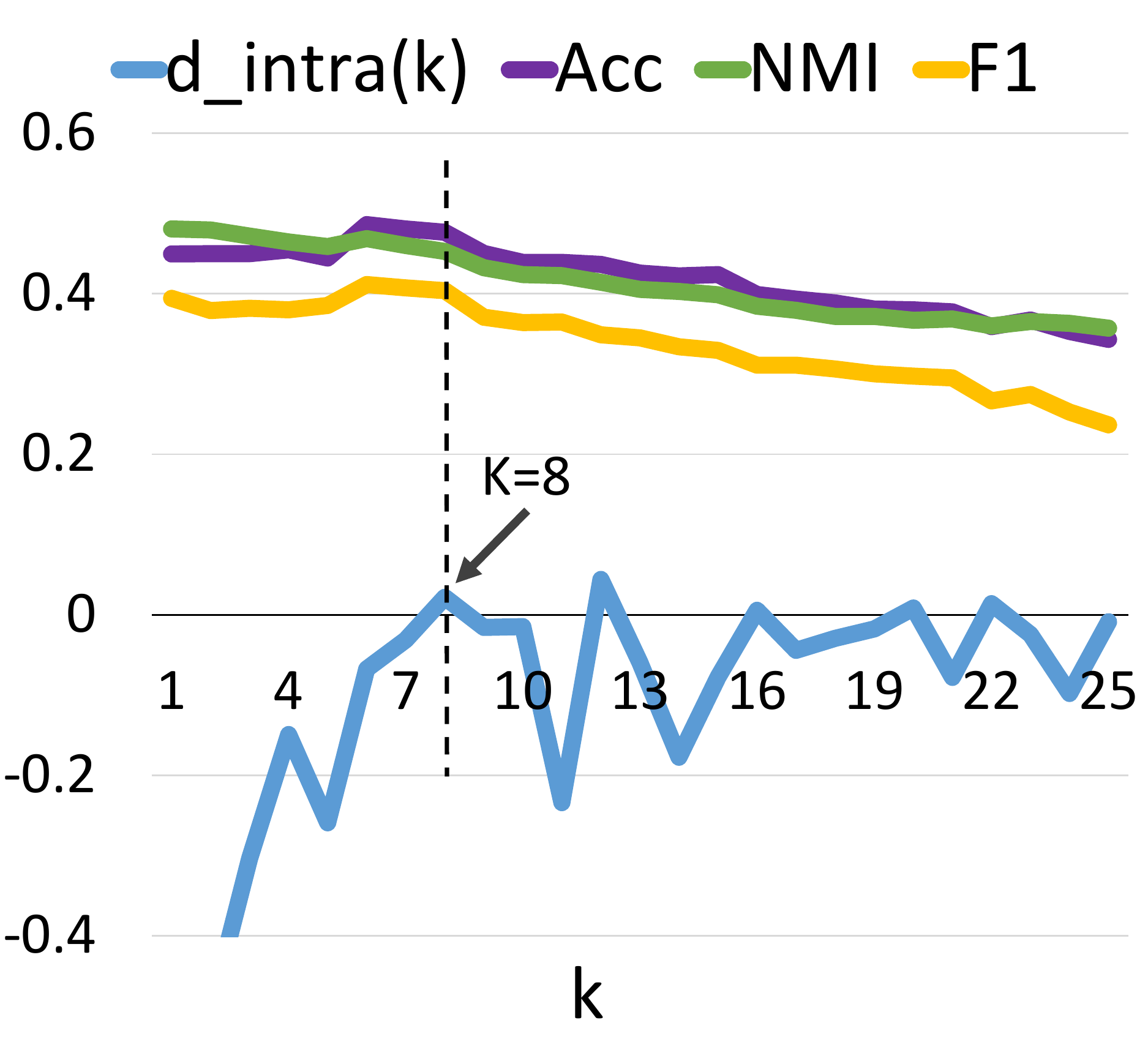}
\label{fig_curve2}}
\caption{$d\_intra(k)$ and clustering performance w.r.t. $k$.}
\label{fig:k}
\end{figure}

\subsection{Result Analysis}

We run each method 10 times for each dataset and report the average clustering results in Table \ref{tab:results}, where the top 2 results are highlighted in bold. The observations are as follows.

1) AGC consistently outperforms the clustering methods that only exploit either node features or graph structures by a very large margin, due to the clear reason that AGC makes a better use of available data by integrating both kinds of information, which can complement each other and greatly improve clustering performance.

2) AGC consistently outperforms existing attributed graph clustering methods that use both node features and graph structures. This is because AGC can better utilize graph information than these methods. In particular, GAE, VGAE, ARGE and ARVGE only exploit 2-hop neighbourhood of each node to aggregate information, and MGAE only exploits 3-hop neighbourhood. In contrast, AGC uses $k$-order graph convolution with an automatically selected $k$ to aggregate information within $k$-hop neighbourhood to produce better feature representations for clustering.

3) AGC outperforms the strongest baseline MGAE by a considerable margin on \emph{Cora}, \emph{Citeseer} and \emph{Pubmed}, and is comparable to MGAE on \emph{Wiki}. This is probably because \emph{Wiki} is more densely connected than others and aggregating information within 3-hop neighbourhood may be enough for feature smoothing. But it is not good enough for larger and sparser networks such as \emph{Citeseer} and \emph{Pubmed}, on which the performance gaps between AGC and MGAE are wider. AGC deals with the diversity of networks well via adaptively selecting a good $k$ for different networks.

To demonstrate the validity of the proposed selection criterion $d\_intra(t-1)>0$, we plot $d\_intra(k)$ and the clustering performance w.r.t. $k$ on \emph{Cora} and \emph{Wiki} respectively in Figure \ref{fig:k}. The curves of \emph{Citeseer} and \emph{Pubmed} are not plotted due to space limitations. One can see that when $d\_intra(k)>0$, the corresponding Acc, NMI and F1 values are the best or close to the best, and the clustering performance declines afterwards. It shows that the selection criterion can reliably find a good cluster partition and prevent over-smoothing. The selected $k$ for \emph{Cora}, \emph{Citeseer}, \emph{Pubmed} and \emph{Wiki} are 12, 55, 60, and 8 respectively, which are close to the best $k\in [0, 60]$ -- 12, 35, 60, and 6 on these datasets respectively, demonstrating the effectiveness of the proposed selection criterion.

AGC is quite stable. The standard deviations of Acc, NMI and F1 are 0.17\%, 0.42\%, 0.01\% on \emph{Cora}, 0.24\%, 0.36\%, 0.19\% on \emph{Citeseer}, 0.00\%, 0.00\%, 0.00\% on \emph{Pubmed}, and 0.79\%, 0.17\%, 0.20\% on \emph{Wiki}, all very small.

The running time of AGC (in Python, with NVIDIA Geforce GTX 1060 6GB GPU) on \emph{Cora}, \emph{Citeseer}, \emph{Pubmed} and \emph{Wiki} is 5.78, 62.06, 584.87, and 10.75 seconds respectively. AGC is a little slower than GAE, VGAE, ARGE and ARVGE on \emph{Citeseer}, but is more than twice faster on the other three datasets. This is because AGC does not need to train the neural network parameters as in these methods, and hence is more time efficient even with a relatively large $k$.

\section{Conclusion}

In this paper, we propose a simple and effective method for attributed graph clustering. To make a better use of available data and capture global cluster structures, we design a $k$-order graph convolution to aggregate long-range data information. To optimize clustering performance on different graphs, we design a strategy for adaptively selecting an appropriate $k$. This enables our method to achieve competitive performance compared to classical and state-of-the-art methods.
In future work, we plan to improve the adaptive selection strategy to make our method more robust and efficient.

\section*{Acknowledgments}

This research was supported by the grants 1-ZVJJ and G-YBXV funded by the Hong Kong Polytechnic University.

\bibliographystyle{named}
\bibliography{ijcai19}

\end{document}